   \theoremstyle{plain}
     \newtheorem{theorem}{Theorem}
     \newtheorem{claim}{Claim}
     \newtheorem{lemma}[theorem]{Lemma}
   \theoremstyle{definition}
     \newtheorem{definition}[theorem]{Definition}
   \newcommand{\argmin}{\mathrm{argmin}}
  \newcommand{\Acal}{{\mathcal A}}
  \newcommand{\Pcal}{{\mathcal P}}
  \newcommand{\Ncal}{{\mathcal N}}
  \newcommand{\Xcal}{{\mathcal X}}
\title{Notes on Classes with Vapnik-Chervonenkis Dimension 1}
\author{Shai Ben-David \\
 School of Computer Science\\
University of Waterloo,\\ Waterloo,Ontario, N2L 3G1,
Canada\\
{\tt shai@cs.uwaterloo.ca} }
\begin{document}

\maketitle

\begin{abstract}

The Vapnik-Chervonenkis dimension is a combinatorial parameter that reflects the "complexity" of a set of sets (a.k.a. concept classes). It has been introduced by Vapnik and Chervonenkis in their seminal  paper \cite{vapnik1971uniform} and has since found many applications, most notably in machine learning theory and in computational geometry. Arguably the most influential consequence of the VC analysis is the fundamental theorem of statistical machine learning, stating that a concept class is learnable (in some precise sense) if and only if its VC-dimension is finite. Furthermore, for such classes a most simple learning rule - empirical risk minimization (ERM) - is guaranteed to succeed.

The simplest non-trivial structures, in terms of the VC-dimension, are the classes (i.e., sets of subsets) for which that dimension is 1.

In this note we show a couple of curious results concerning such classes. The first result shows that such classes share a very simple structure, and, as a corollary, the labeling information contained in any sample labeled by such a class can be compressed into a single instance.

The second result shows that due to some subtle measurability issues, in spite of the above mentioned fundamental theorem, there are classes of dimension 1 for which an ERM learning rule fails miserably\footnote{I have discovered the results  presented in this note more than 20 year ago, and have mentioned them in public talks as well as private communications
over the years. However, this is the first time I have written them up for publication.}.

\end{abstract}

\section{Preliminaries: The Vapnik-Chervonenkis dimension}
%\begin{enumerate}
%\item Define the basic notions: a concept class, $L_S(h)$, $L_P(h)$ and $P=(D, t)$.
%\item Define VCdim 
%\item Show that for a class linearly ordered by inclusion, it is 1.
%\item State the Fundamental Theorem(s).
%
%\end{enumerate}
%\noindent \textbf{The Vapnik-Chervonenkis dimension:}
\begin{definition} [\cite{vapnik1971uniform}]
Let $\Xcal$ be any set, let $2^{\Xcal}$ denote its power set - the set of all subsets of $\Xcal$.
%(we call it the \emph{domain set}), 
A \emph{concept class} is a set of subsets of $\Xcal$, $H \subseteq 2^{\Xcal}$.
We will identify subsets of $\Xcal$ with binary valued functions over $\Xcal$ (a function $h: \Xcal \to \{0,1\}$ is identified with with the set $h^{-1}(1)=\{x \in \Xcal: h(x)=1\}$).
\begin{itemize}

\item $H$ \emph{shatters} $A \subseteq \Xcal$ if $\{h \cap A : h \in H\}=2^A$. Note that for a finite $A$ this is equivalent to  $|\{h \cap A : h \in H\}|=2^{|A|}$.
\item The \emph{Vapnik-Chervonenkis dimension} of $H$ is defined as $VCdim(H)=\sup \{|A|: H shatters A\}$.

\end{itemize}
\end{definition}
This note focuses on classes whose VC-dimension is 1. The following simple claim is well known and can be easily verified.

\begin{claim} \label{claim_inclusion}
Given a class $H$ over some domain set $\Xcal$. If there exists a linear order $\preceq$ over $\Xcal$ such that very member $h$ of $H$ is an initial segment w.r.t. that order (namely, for all $x, y \in \Xcal$, if $x \preceq y$ and $h(y)=1$ then $h(x)=1$) 
%linearly ordered by inclusion (namely, for every $h, h' \in H$ either $h \subseteq h'$ or $h' \subseteq h$) 
then $VCdim(H) \leq 1$.
\end{claim}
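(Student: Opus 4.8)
The plan is to argue by contradiction and show that the initial-segment hypothesis forbids the single labeling pattern that shattering a pair would require. First I would reduce the problem to pairs: recall that $VCdim(H) \leq 1$ fails only if $H$ shatters some set $A$ with $|A| \geq 2$, and that shattering is inherited by subsets (if $\{h \cap A : h \in H\} = 2^A$, then restricting to any $B \subseteq A$ gives $\{h \cap B : h \in H\} = 2^B$). Hence it suffices to show that $H$ shatters no two-element set, which would already establish $\sup\{|A| : H \text{ shatters } A\} \leq 1$.

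So suppose toward a contradiction that $H$ shatters some $A = \{x, y\}$ with $x \neq y$. Since $\preceq$ is a linear order on $\Xcal$, I may assume without loss of generality that $x \prec y$. By the definition of shattering, every subset of $A$ must appear as a trace $h \cap A$ for some $h \in H$; in particular the subset $\{y\}$ must be realized, i.e.\ there exists $h \in H$ with $h \cap A = \{y\}$, which in function notation means $h(y) = 1$ and $h(x) = 0$.

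The contradiction is then immediate from the initial-segment property: since $x \preceq y$ and $h(y) = 1$, the hypothesis forces $h(x) = 1$, contradicting $h(x) = 0$. Thus no two-element set can be shattered, and $VCdim(H) \leq 1$ follows. I do not expect any real obstacle here, as the argument is a direct application of the hypothesis; the only points that warrant a line of care are the reduction to pairs (so that the possibly infinite $\Xcal$ and the supremum in the definition of $VCdim$ are handled cleanly) and the harmless use of the linear order to name the smaller element $x$. The conceptual content is simply that an initial-segment structure can never produce the ``larger point in, smaller point out'' pattern that shattering a pair demands.
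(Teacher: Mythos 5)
Your proof is correct: the reduction to two-element sets is valid since shattering passes to subsets, and the initial-segment hypothesis indeed rules out realizing the trace $\{y\}$ when $x \prec y$. The paper leaves this claim as ``easily verified,'' but the argument it sketches inside the proof of Theorem \ref{thm_structure} (for the more general tree-ordering case) is the same forbidden-pattern argument you give, so your approach matches the paper's.
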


\begin{definition}
For  functions $h, f: \Xcal \to \{0,1\}$, the \emph{$f$-representation of $h$} is the set $h_f=\{x \in \Xcal: h(x)\neq f(x)$. Note that if $f$ is the constant 0 function then $h_f$ is just the usual set equivalent of the function $h$.
For a class of functions $H$ and $f: \Xcal \to \{0,1\}$, we define the \emph{$f$-representation of $H$} as $H_f=\{h_f : h \in H \}$. 
\end{definition}

\noindent Note that for any concept class $H$, and any binary valued $f$ as above, $VCdim(H)=VCdim(H_f)$.

The VC dimension plays a major role in machine learning theory. We discuss this aspect some more in Section \ref{subsec_more_prem}.

\section{A structure theorem for classes of VCdim 1}

In this section we show that classes of VC-dimension 1 are in fact very simple. We have already mentioned, in Claim \ref{claim_inclusion} that if the sets in a class $H$ are linearly ordered
by inclusion, then $VCdim(H)=1$. This claim can be somewhat extended by noting that one does not really need a \emph{linear} order. In fact, having the inclusion partial ordering of the members 
of $H$ being a \emph{tree} suffices to imply the same conclusion. This is formalized by the following.

\begin{definition}
We say that a partial order $\preceq$ over some set $\Xcal$ is a \emph{tree ordering} if, for every $x \in \Xcal$ the initial segment $I_x=\{y: y \preceq x\}$ is linearly ordered (under $\preceq$).
\end{definition}

\begin{claim} \label{claim_gen_tree}
%If the inclusion partial ordering of a class $H$ is a generalized tree then $VCdim(H) \leq 1$.
Given a class $H$ over some domain set $\Xcal$. If there exists a tree ordering $\preceq$ over $\Xcal$ such that very member $h$ of $H$ is an initial segment w.r.t. that order (namely, for all $x, y \in \Xcal$, if $x \preceq y$ and $h(x)=1$ then $h(y)=1$) 
%linearly ordered by inclusion (namely, for every $h, h' \in H$ either $h \subseteq h'$ or $h' \subseteq h$) 
then $VCdim(H) \leq 1$.

\end{claim}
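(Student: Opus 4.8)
The plan is to show directly that $H$ shatters no two-element subset of $\Xcal$; since $VCdim(H) = \sup\{|A| : H \text{ shatters } A\}$, this gives $VCdim(H) \leq 1$ at once. First I would isolate the two properties of the members of $H$ that the argument actually consumes. Because $\preceq$ is a tree ordering and each $h \in H$ is an initial segment with respect to it, each such $h$ is a $\preceq$-chain: its elements lie inside some $I_x = \{y : y \preceq x\}$, which the definition of a tree ordering declares to be linearly ordered. Second, each $h$ is closed in the direction recorded in the claim, namely $x \preceq y$ and $h(x) = 1$ imply $h(y) = 1$. No other feature of the members will enter the argument.

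Next I would fix two distinct points $a, b \in \Xcal$ and assume, toward a contradiction, that $H$ shatters $\{a,b\}$. Shattering in particular produces a member $h_3 \in H$ with $a, b \in h_3$. The crucial first move is to observe that, since $h_3$ is a chain, $a$ and $b$ are forced to be $\preceq$-comparable. This already disposes of every incomparable pair: two incomparable points can never both belong to a single chain-valued concept, so the pattern in which both points are present is unrealizable and such a pair cannot be shattered. I expect this to be the heart of the proof and the one genuinely new ingredient beyond Claim~\ref{claim_inclusion}, because it is exactly where the tree hypothesis does the real work --- the linear order of Claim~\ref{claim_inclusion} made comparability automatic, whereas here it must be extracted from the chain structure of the concepts.

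It then remains to handle comparable pairs, and for these I would reuse the threshold argument behind Claim~\ref{claim_inclusion}. Writing $a \preceq b$ without loss of generality, the closure property says that any member containing $a$ also contains $b$; hence no member realizes the pattern $h \cap \{a,b\} = \{a\}$, one of the four patterns required for shattering is missing, and $\{a,b\}$ is not shattered, contradicting the assumption. As $a$ and $b$ were arbitrary, no two-element set is shattered, and $VCdim(H) \leq 1$ follows.

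To summarize the difficulty: the comparable case is essentially the one-dimensional argument already present in Claim~\ref{claim_inclusion}, so the only step requiring care is the reduction performed via $h_3$ --- the observation that a concept which is a $\preceq$-chain cannot contain two incomparable points, and therefore that incomparable pairs contribute nothing to the VC dimension. Everything downstream of that observation is routine.
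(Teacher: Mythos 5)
Your overall route is the one the paper intends: since the paper's proof of Claim~\ref{claim_gen_tree} is left as a stub, the actual argument appears inside the proof of Theorem~\ref{thm_structure}, and it is exactly your two-step scheme --- use the concept realizing the pattern $(1,1)$ on a pair to force $\preceq$-comparability, then use the closure property to show one of the remaining patterns is unrealizable. Your treatment of the comparable case is fine (you use the upward-closure orientation written in the claim's parenthetical, the paper uses the opposite orientation in Theorem~\ref{thm_structure}; either way one of the four patterns dies, which is all that is needed).

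The gap sits precisely at the step you yourself flag as the heart of the proof: the assertion that each $h\in H$ is a $\preceq$-chain ``because its elements lie inside some $I_x$.'' That containment does not follow from the stated hypothesis, and in fact under the hypothesis as literally written the claim is false. Take $\Xcal=\{r,a,b\}$ with $r\prec a$, $r\prec b$, and $a,b$ incomparable; each $I_x$ is a chain, so this is a tree ordering. The sets $\emptyset,\{a\},\{b\},\{a,b\}$ all satisfy the closure property of Claim~\ref{claim_gen_tree} ($x\preceq y$ and $h(x)=1$ imply $h(y)=1$), yet this class shatters $\{a,b\}$; with the opposite orientation of Claim~\ref{claim_inclusion}, the downward-closed sets $\emptyset,\{r,a\},\{r,b\},\{r,a,b\}$ do the same. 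So the chain property cannot be \emph{extracted} from ``closure with respect to a tree ordering'' --- it must be built into the reading of ``initial segment,'' namely a linearly ordered, downward-closed set (a segment of a single branch, as with the sets $h_r=\{s:s\prec r\}$ used later in the paper; in the converse direction of Theorem~\ref{thm_structure} it is the assumption $VCdim(H)\leq 1$ that forces each $h_f$ to be a chain under $\leq^H_f$). To be fair, the paper's own write-up makes the same leap, asserting comparability of two points of a common concept with no justification; but your proposed justification is additionally a non sequitur even under the corrected reading, since a downward-closed chain (e.g., an entire unbounded branch) need not be contained in any single $I_x$ --- what is true is that subsets of an $I_x$ are chains, not that initial segments lie in some $I_x$. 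Once the hypothesis is strengthened so that each member of $H$ is assumed to be a chain, the remainder of your argument is correct and coincides with the paper's.
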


\begin{proof}
The proof of that claim is simple -
\end{proof}

We will now show that any class having VC-dimension 1 has such a structure.

\begin{theorem} \label{thm_structure}
Let $H$ be a concept class over some domain $\Xcal$. The following statements are equivalent:
\begin{enumerate}
\item  $VCdim(H) \leq 1$.
\item There exists some tree ordering over $\Xcal$ and a representation $f: \Xcal \to \{0,1\}$ such that every element of $H_f$ is an initial segment under that ordering relation.
\end{enumerate}
\end{theorem}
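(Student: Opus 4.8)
The implication $(2)\Rightarrow(1)$ is the easy direction: it is exactly Claim~\ref{claim_gen_tree} applied to $H_f$, together with the already-noted identity $VCdim(H)=VCdim(H_f)$. So the plan is to concentrate on $(1)\Rightarrow(2)$, i.e.\ to manufacture, out of the single combinatorial hypothesis $VCdim(H)\le 1$, both the representation $f$ and a tree order under which every member of $H_f$ is an initial segment.

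First I would fix the representation. Assuming $H\neq\emptyset$ (the empty class is trivial), pick any $h_0\in H$ and set $f=h_0$. The point of this choice is that $(h_0)_f=\{x\in\Xcal: h_0(x)\neq h_0(x)\}=\emptyset\in H_f$, so on \emph{every} pair of domain points the pattern $(0,0)$ is realized; this is precisely what will let me cash in the VC bound. Note also that, since $\emptyset\in H_f$, no point lies in all of $H_f$, so there is no ``top'' to worry about.

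Next I would read the candidate order off $H_f$ directly: for $x,y\in\Xcal$ declare $x\sqsubseteq y$ iff every $g\in H_f$ that contains $y$ also contains $x$. This is reflexive and transitive, and it is rigged so that each $g\in H_f$ is downward closed, hence an initial segment: if $y\in g$ and $x\sqsubseteq y$ then $x\in g$. The crux --- and where I expect the real work --- is the tree property, that $I_y=\{x:x\sqsubseteq y\}$ is a chain for every $y$, and this is exactly where $VCdim\le 1$ is spent. Suppose $x_1,x_2\sqsubseteq y$ were $\sqsubseteq$-incomparable. Then some member realizes $(1,0)$ and some member realizes $(0,1)$ on the pair $(x_1,x_2)$; combined with the pattern $(0,0)$ coming from $\emptyset\in H_f$, the only missing pattern is $(1,1)$. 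But if $y$ lies in some $g\in H_f$, then $x_1,x_2\sqsubseteq y$ forces $x_1,x_2\in g$, producing $(1,1)$ as well; thus $\{x_1,x_2\}$ would be shattered, contradicting $VCdim(H_f)=VCdim(H)\le 1$. Hence $I_y$ is a chain.

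Two loose ends remain, both of which I expect to be routine rather than obstacles. The chain argument silently assumed $y$ lies in some member of $H_f$; the ``null'' points --- those belonging to no $g\in H_f$, equivalently the domain points on which every $h\in H$ agrees with $f$ --- should be excised from $\sqsubseteq$ and re-inserted as isolated, pairwise-incomparable elements. This makes $I_y$ a singleton for each such point and leaves every member of $H_f$ downward closed, since null points sit in no member. Finally, $\sqsubseteq$ is only a preorder, as points lying in exactly the same members of $H_f$ are $\sqsubseteq$-equivalent; to obtain a genuine partial (tree) order I would refine $\sqsubseteq$ by imposing an arbitrary linear order inside each equivalence class. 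Because such a class is, for every $g\in H_f$, either contained in $g$ or disjoint from it, this refinement preserves both the down-closedness of the members and the chain property of the sets $I_y$, and delivers the tree ordering required in statement~(2).
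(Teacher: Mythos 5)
Your proof is correct and follows essentially the same route as the paper: your order $x\sqsubseteq y$ (every member of $H_f$ containing $y$ also contains $x$) is exactly the paper's $\leq^H_f$ with $f\in H$, you use $\emptyset=f_f\in H_f$ to supply the $(0,0)$ pattern just as the paper uses $f$ itself, and the tree property is derived from the same four-pattern shattering contradiction, with $(2)\Rightarrow(1)$ handled by the same easy comparability argument. Your explicit treatment of the two loose ends --- isolating the null points and linearizing the $\sqsubseteq$-equivalence classes to turn the preorder into a genuine partial order --- spells out carefully what the paper compresses into its ``w.l.o.g.'' separation assumption, so it is, if anything, a slightly more watertight rendering of the same argument.
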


\begin{proof}
\noindent \textbf{ 1 implies 2:} 
 Just note that if every member of $H$ s an initial segment under $\preceq$ then, for any $x_1, x_2 \in \Xcal$ if there exists some $h \in H$ such that $h(x_1)=h(x_2)=1$ then it must be the case that either $x_1 \preceq x_2$ or $x_2 \preceq x_1$.
However, in the first case there exist no $h' \in H$ such that $h'(x_1)=0$ and $h'(x_2)=1$ and in the second case there exist no $h' \in H$ such that $h'(x_2)=0$ and $h'(x_1)=1$, therefore the set $\{x_1, x_2\}$ is not shattered by $H$.

\noindent \textbf{ 2 implies 1:} Assume, w.l.o.g., that for every $x \neq y \in X$, there exists some $h \in H$ so that $h(x)\neq h(y)$.
Pick some $f \in H$ and consider the partial ordering $\leq^H_{f}$ defined by

\[\leq^H_f=\{(x, y): \forall h \in H, ~h(y) \neq f(y) \rightarrow
h(x) \neq f(x)\}. \]

Lemma \ref{lem_tree_order} shows that this is indeed a tree ordering. The proof is concluded by noting that the definition of the relation $\leq^H_f$ implies that for every $h \in H$, the set $h_f$ (namely, $\{x: h)x) \neq f)x)\}$) is an initial segment w.r.t $\leq^H_f$. 
%and Lemma \ref{lem_initialseg} concludes the proof by showing that every element of $H$ is indeed an initial segment of that ordering.
\end{proof}

%*************
%\begin{definition}
%For a concept class $ H \subseteq 2^{\Xcal}$ over some domain $\Xcal$ and a function $f: \Xcal \to \{0,1\}$, we define a partial ordering $\leq^H_f$ over $H$ by 
%\[\leq^H_f=\{(x, y): \forall h \in H, ~h(y) \neq f(y) \rightarrow
%h(x) \neq f(x)\}. \]
%
%\end{definition}
%
%***************************
%Given a class of binary functions $H$, over some domain set $X$, 
%
%We define a binary relation $R_H$ over $X \times X$ by $R_H=\{(x, y): \forall h \in H, ~h(y) \neq h_0(y) \rightarrow
%h(x) \neq h_0(x)\}$.

\begin{lemma} \label{lem_tree_order} 
$\leq^H_f$ is a partial ordering. Namely, it is reflexive, transitive and anti symmetric. Furthermore, 
the assumption that $VCdim(H) \leq 1$ implies that 
$\leq^H_f$ is a the ordering.
\end{lemma}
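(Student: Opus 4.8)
The plan is to reformulate $\leq^H_f$ through the \emph{signature} of each point $x\in\Xcal$, namely $S(x)=\{h\in H: h(x)\neq f(x)\}=\{h\in H: x\in h_f\}$. Unwinding the definition, $(x,y)\in\,\leq^H_f$ holds precisely when $S(y)\subseteq S(x)$, so the relation is just reverse inclusion of signatures. Reflexivity and transitivity then drop out immediately from the corresponding properties of $\subseteq$. For anti-symmetry I would observe that $S(x)=S(y)$ means every $h\in H$ agrees on whether it flips $x$ and $y$ away from $f$; this is exactly the condition that no member of $H_f$ separates $x$ from $y$. The reduction ``$h(x)\neq h(y)$ for some $h$'' stated at the start of the proof of Theorem \ref{thm_structure} is really needed here in its $H_f$-form (note that opposite labels with $f(x)\neq f(y)$ can still yield $S(x)=S(y)$), so I would phrase the w.l.o.g.\ step as: pass to the quotient in which points with equal signatures are identified, equivalently assume $H_f$ separates points. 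With that reduction distinct points have distinct signatures and anti-symmetry follows.

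The heart of the argument is the tree property, and this is where $VCdim(H)\le 1$ enters. The key claim I would isolate is: \emph{any two points whose signatures intersect have nested signatures}. To prove it, suppose $S(w)\not\subseteq S(w')$ and $S(w')\not\subseteq S(w)$ yet $S(w)\cap S(w')\neq\emptyset$, and pick witnesses $a\in S(w)\setminus S(w')$, $b\in S(w')\setminus S(w)$ and $c\in S(w)\cap S(w')$. Reading these off as traces on $\{w,w'\}$ in the family $H_f$, the hypothesis $a$ contributes $\{w\}$, $b$ contributes $\{w'\}$, $c$ contributes $\{w,w'\}$, while $f_f=\emptyset\in H_f$ contributes $\emptyset$; hence $H_f$ shatters $\{w,w'\}$. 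Since $VCdim(H_f)=VCdim(H)\le 1$, this is impossible, proving the claim. Geometrically this says $\{S(x):x\in\Xcal\}$ is a laminar family, and laminar families are exactly the ones whose (reverse) inclusion order is a forest.

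To finish I would deduce linearity of each initial segment $I_z=\{y: S(z)\subseteq S(y)\}$: any two $w,w'\in I_z$ satisfy $S(z)\subseteq S(w)\cap S(w')$, so as soon as $S(z)\neq\emptyset$ their signatures intersect and the key claim makes them comparable, i.e.\ $I_z$ is a chain. The step I expect to be the real obstacle is precisely the degenerate case $S(z)=\emptyset$, namely points that no $h\in H$ ever flips away from $f$. Under the bare definition such a point sits above every other point and would force all of $\Xcal$ into a single initial segment, which need not be linear; this is the one place where a careless choice of the representation $f$ actually breaks the conclusion. I would handle it by choosing $f$ (among the members of $H$) so that no point is permanently equal to $f$, or equivalently by treating such points as separate roots of the forest rather than as a common top --- reconciling this with the literal one-line definition of $\leq^H_f$ is the subtle part of the proof.
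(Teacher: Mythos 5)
Your proof is correct and, at its combinatorial core, it is the same argument as the paper's: your three witnesses $a,b,c$ together with $f$ produce exactly the four traces $\emptyset,\{w\},\{w'\},\{w,w'\}$ that the paper extracts from $h_2,h_3,h_1,f$ to shatter an incomparable pair, and your signature reformulation $x\leq^H_f y \Leftrightarrow S(y)\subseteq S(x)$ is a transparent repackaging of the definition. What your version buys is precision at the two places where the paper's write-up is genuinely gapped, and you identified both. First, anti-symmetry: the paper asserts that $x\leq^H_f y$ and $y\leq^H_f x$ force $h(x)=h(y)$ for all $h\in H$, which can fail when $f(x)\neq f(y)$ --- take $\Xcal=\{x,y\}$ and let $H$ consist of the two functions with value patterns $(0,1)$ and $(1,0)$ on $(x,y)$, with $f=(0,1)$; then $VCdim(H)=1$ and $H$ separates $x$ from $y$ (so the theorem's stated w.l.o.g.\ holds), yet $S(x)=S(y)$ and anti-symmetry fails. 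Your replacement of the w.l.o.g.\ by ``$H_f$ separates points'' (equivalently, quotienting by equal signatures) is the correct repair. Second, the degenerate case $S(z)=\emptyset$: the paper's tree-property argument invokes, mid-proof, the assumption that ``for every $x$, each of the labels is given by some $h\in H$,'' which was never part of its reduction and is not implied by it --- with $\Xcal=\{z,y_1,y_2\}$, $H=\{\emptyset,\{y_1\},\{y_2\}\}$, $f=\emptyset$, one has $VCdim(H)=1$ and $H$ separating points, yet $y_1\leq^H_f z$ and $y_2\leq^H_f z$ with $y_1,y_2$ incomparable, so $\leq^H_f$ is literally not a tree ordering; the lemma needs exactly your side condition. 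One caution about your proposed fix: re-choosing $f$ among the members of $H$ does not always eliminate empty signatures (in the example just given, every choice of $f\in H$ leaves $S(z)=\emptyset$), so it is your alternative repair that actually works --- after the quotient there is at most one point with empty signature, and detaching it so that it is comparable only to itself restores the tree property; this is harmless for Theorem \ref{thm_structure}, since such a point belongs to no $h_f$ and deleting pairs from the relation cannot destroy the downward-closure of the sets $h_f$.
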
 
\begin{proof}

\begin{itemize}

\item Being reflexive and transitive follows trivially from the definition.
\item For anti-symmetry, let $x,y$ be such that both $x \leq^H_f y$ and $y \leq^H_f x$ hold. It is easy to see
that this implies that for all $h \in H$, $h(x)=h(y)$.
\item Assume, by way of contradiction, that $\leq^H_f$ is not a tree ordering. This means that for some $x \in \Xcal$ there exist $y, z$ so that $y \leq^H_f x$, $z \leq^H_f x$ but neither $x \leq^H_f z$ nor $z \leq^H_f y $ holds.
Let us show that in such a case the pair $\{y, z\}$ is shattered by $H$ (and thus $VCdim(H) \geq 2$ contradicting our assumption). Pick $h_1 \in H$ for which $h_1(x) \neq f(x)$ (such $h_1$ exists by our assumetion that for every $x \in \Xcal$, each of the labels $\{0,1\}$ are given by some $h \in H$). The definition of $\leq^H_f$ implies now that $h_1(y) \neq f(y)$ and $h_1(z) \neq f(z)$. The non-compatibility of $y,z$ implies the existence of $h_2, h_3 \in H$ such that $h_2(y)=f(y)$
and $h_2(z) \neq f(z)$ and those labels are flipped for $h_3$. It follows that $\{h_1, h_2, h_3, f \}$ shatter $\{y, z\}$ and since we picked $f \in H$, it follows that $H$ also shatters $\{y,z\}$.
\end{itemize}
\end{proof}

%Assume, by way of contradiction, that some $h \in H$ is not an initial segment in this ordering. For such an $h$, there is some $x \leq^H_f y$ such that $h(y)=1$ but $h(x)=0$.

%%\begin{lemma} If $VCdim(H) <2$ then, for every $h \in H$, $A_h \stackrel{def}{=}\{x: h(x) \neq h_0(x) \}$ is a fully ordered initial segment under $R_H$. That is,
%%\begin{enumerate} 
%%\item For every $x, y \in A_h$, either $R_H(x,y)$ or $R_H(y,x)$.
%%\item If $x \in A_h$ and $R_H(y,x)$ then $y \in R_h$.
%%\end{enumerate}
%%\end{lemma}
%%\begin{proof}
%%\begin{enumerate}
%%\item  Assume both $R_H(x,y)$ and $R_H(y,x)$ fail. Then there exist $h_1, h_2 \in H$ s.t. $h_1(x) \neq h_0(x)$
%%and $h_1(y)=h_0(y)$ and $h_2(x) = h_0(x)$
%%and $h_2(y) \neq h_0(y)$. Since $x, y \in A_h$, we also have $h(x) \neq h_0(x)$ and $h(y) \neq h_0(y)$.
%%It follows that the set $\{h_0, h, h_1, h_2\}$ shatters $\{x,y\}$, contradicting the assumption that $VCdim(H) <2$.
%%
%%\item This follows immediately from the definition of the relation $R_H$.
%%\end{enumerate}
%%\end{proof}

\subsection{Sample compression for classes of Vcdim 1}

The above structure theorem has a nice implication for the issue of \emph{sample compression schemes}.

\begin{definition}
A sample compression scheme of size $d$ for a class $H$ is a pair of functions, $F$, $G$, such that $F$ maps samples $S$ from $ \bigcup_{m \in \Ncal} (\Xcal \times \{0,1\})^m$ to samples $F(S) \in  \bigcup_{0 \leq m \leq d} (\Xcal \times \{0,1\})^m$
such that for any such $S$, if there exists some $h \in H$ that is constant with $S$ (namely, for all $(x,y) \in S$, $h(x)=y$) then $F(S) \subseteq S$, and $G :  \bigcup_{0 \leq m \leq d} (\Xcal \times \{0,1\})^m \to 2^{\Xcal}$ such that for any $S$ and any $(x, y) \in S$, $G(F(S))(x)=y$.\\

A sample compression scheme is called \emph{unlabeled } if for every $G(S)$ consists of just a subset of $\Xcal$ (of elements appearing in $S$), without their labels.

\end{definition}

Sample compression schemes were introduced by Littlestone and Warmuth \cite{Littlestone-Warmuth} and a long standing open problem is the conjecture that there is some content $C$ such that every concept class of finite VC-dimension
has a sample compression scheme of size $C Vcdim(H)$.

 Theorem \ref{thm_structure} readily implies that every class of VC dimension 1 has an unlabeled  sample compression scheme of size 1 as follows:
 
 \begin{quote} Given a class $H$ such that $VCdim(H)=1$, let $f$ be a member of $H$ and $\leq^H_f$ as in the proof of Theorem  \ref{thm_structure}.
 For a sample $S = ((x_1, h(x_1)), \ldots (x_m, h(x_m))$ (for some $h \in H$), let $$F(S)= ~\mbox{the $\leq^H_f$-maximal element in} ~\{x_i: i \leq m ~\mbox{and} ~h(x_i)\neq f(x_i)\}$$
 
% lemmata are all that we need to show that $$C((x_1, h(x_1), \ldots (x_m, h(x_m)))= ~\mbox{the $R_H$-maximal element in} ~\{x_i: i \leq m ~\mbox{and} ~h(x_i)\neq h_0(x_i)\}$$
(and $F((x_1, h(x_1), \ldots (x_m, h(x_m)))=\emptyset$ if $\{x_i: i \leq m ~\mbox{and} ~h(x_i)\neq f(x_i)\}=\emptyset$).\\

Let $G$ be the function that on input $x$ outputs the function $G(x)$ so that that on input $y \in \Xcal$,  if  $x \leq^H_f y$ then $G(x)(y)=f(y)$, and for any $y$ such that $y \leq^H_f x$, $G(x)(y) = 1-f(y)$ .
\end{quote}

It is easy to verify that the pair $(F,G)$ is a size 1 unlabeled compression scheme for $H$.

\section{ERM may fail to learn VC dimension 1 Classes }

\subsection{More preliminaries} \label{subsec_more_prem}
%\noindent \textbf{The probability setup}\\

%\begin{description}

\noindent \textbf{The probability setup:} For any given domain set $\Xcal$, we will consider probability distributions over $\Xcal \times \{0,1\}$. Given such a probability distribution $P$, we define
the induced labeling rule as the function $\ell_P: \Xcal \to [0,1]$ defined by $\ell_P(x)=P(y=1|x)$, and the induced marginal distribution, $ D_P$,  as the projection of $P$ on $\Xcal$. 
We will identify $P$ with the pair $(D_P, \ell_P)$.

\begin{definition} [0-1 loss]
\begin{enumerate}

\item For a probability distribution $P$ over $\Xcal \times \{0,1\}$ and $h: \Xcal \to \{0,1\}$, 
\[L_{P}(h) = P[\{(x,y) : h(x) \neq y\}] \]

\item For a finite $S \subseteq \Xcal \times \{0,1\}$ and $h: \Xcal \to \{0,1\}$, 
\[L_S(h) = \frac{|\{(x,y) \in S : h(x) \neq y\} |}{|S|}  \]

\end{enumerate}
\end{definition}

\begin{definition} A class $H$, over some domain set, $\Xcal$, has the\emph{ Uniform Convergence Property (UCP)} with respect to a family of probability distributions $\Pcal$ over 
$\Xcal \times \{0,1\}$, if for every $\epsilon >0, \delta >0$ there exist some $m_H(\epsilon, \delta) \in \Ncal$ such that for every $P \in \Pcal$, $m \geq m_H(\epsilon, \delta)$ implies that
\[ \Pr_{S \sim P^m} [\sup_{h \in H}|L_S(h) - L_P(h)| >\epsilon] < \delta.\]

\end{definition}

\noindent \textbf{Learning and Empirical Risk Minimization:} A \emph{learning rule}  is a function that takes labeled samples as input and outputs a classifier. Formally,
it is a function $\Acal : \bigcup_{m \in \Ncal} (\Xcal \times \{0,1\})^m \to 2^{\Xcal}$.
%\item[Learning and Empirical Risk Minimization]
\begin{definition} 

\begin{enumerate}
\item A learning rule $\Acal$  is an \emph{Empirical Risk Minimizer (ERM)} for some class $H$, if $\Acal(S) \in \argmin \{L_S(h) : h \in H \}$, for every $S \in \bigcup_{m \in \Ncal} (\Xcal \times \{0,1\})^m$.
\item A learning rule $\Acal$  is a \emph{Probably Approximately Correct (PAC)} learner for some class $H$ w.r.t. some measurable algebra $(\Xcal, \Omega)$, if for every $\epsilon >0, \delta >0$ there exist some $m_H(\epsilon, \delta) \in \Ncal$ such that for every probability measure $P$ over $(\Xcal, \Omega) \times \{0,1\}$, $m \geq m_H(\epsilon, \delta)$ implies that
\[ \Pr_{S \sim P^m} [\sup_{h \in H}L_P(\Acal(S)) - L_P(h) >\epsilon] < \delta. \]
It is common to omit the $\sigma$-algebra of measurable sets, $\Omega$, from the notation. It is implicitly assumed to be the full power set of $\Xcal$ if $\Xcal$ is finite or countably infinite, or the  Lebesgue $\sigma$-algebra when $\Xcal$ is a subset of some Euclidean space.

\end{enumerate}
\end{definition}
The following claim is well known and can be easily verified.
\begin{claim} A class $H$ has the Uniform Convergence Property with respect to the family of probability distributions $\Pcal$ over 
$(\Xcal,  \Omega) \times \{0,1\}$ if and only if any ERM learning function is a PAC learner for $H$.

\end{claim}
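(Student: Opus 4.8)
The plan is to prove the two implications separately, the common engine being the elementary observation that the excess risk of any hypothesis is controlled by the two-sided uniform deviation $\sup_{h\in H}|L_S(h)-L_P(h)|$ between empirical and true error. I will read ``any ERM is a PAC learner'' as ``every ERM rule is a PAC learner.''

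For the direction UCP $\Rightarrow$ (every ERM is PAC) I would argue directly. Fix $\epsilon,\delta>0$ and take the PAC sample size to be the uniform-convergence sample size $m_H(\epsilon/2,\delta)$. Let $\Acal$ be an arbitrary ERM and write $g=\Acal(S)$. For $m\ge m_H(\epsilon/2,\delta)$, with $P$-probability at least $1-\delta$ the sample satisfies $\sup_{h\in H}|L_S(h)-L_P(h)|\le \epsilon/2$. On this good event I chain three inequalities, valid for every $h\in H$: first $L_P(g)\le L_S(g)+\epsilon/2$ (uniform convergence applied to $g$), then $L_S(g)\le L_S(h)$ (since $g$ minimises the empirical risk over $H$), and finally $L_S(h)\le L_P(h)+\epsilon/2$ (uniform convergence applied to $h$). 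Composing these gives $L_P(g)-L_P(h)\le\epsilon$ for every $h$, hence $\sup_{h\in H}\big(L_P(g)-L_P(h)\big)\le\epsilon$, which is exactly the PAC guarantee with the stated sample size. Note that this uses the deviation once ``downward'' on $g$ and once ``upward'' on $h$.

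For the converse, (every ERM is PAC) $\Rightarrow$ UCP, I would argue by contraposition: assuming UCP fails, I would exhibit a single ERM that is not PAC. Failure of UCP supplies $\epsilon_0,\delta_0>0$ such that for every $m$ there is a distribution $P_m$ with $\Pr_{S\sim P_m^m}\big[\sup_{h\in H}|L_S(h)-L_{P_m}(h)|>\epsilon_0\big]\ge\delta_0$. By a union bound and pigeonhole, one of the two one-sided deviations is responsible for infinitely many $m$, so I would split into the case where some hypothesis has its empirical error underestimated (then an empirical minimiser is tempted toward a hypothesis of genuinely large true error) and the symmetric case where the in-class optimum has its empirical error overestimated (then the empirical minimiser is driven away from the good hypothesis toward one whose true error is larger). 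In either case the goal is to specify, in a distribution-independent way, how $\Acal$ breaks ties and selects among near-minimisers so that on the deviation event the returned hypothesis $g$ satisfies $L_{P_m}(g)-\inf_{h\in H}L_{P_m}(h)>\epsilon_0/2$, contradicting the assumed PAC property uniformly in $m$.

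I expect this backward direction to be the main obstacle, precisely because an ERM is a fixed, distribution-blind function whereas the witnessing distributions $P_m$ vary with $m$: the work lies in committing to one selection rule that is simultaneously bad against the entire sequence $(P_m)$, and in checking that a one-sided deviation really translates into excess true risk for the selected hypothesis (here the identity $L_P(\bar h)=1-L_P(h)$, $L_S(\bar h)=1-L_S(h)$ relating over- and under-estimation via complementation of the loss is convenient for passing between the two cases). The forward direction, by contrast, is the routine three-term sandwich above and should be dispatched in a few lines.
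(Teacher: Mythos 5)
Your forward direction (UCP $\Rightarrow$ every ERM is PAC) is correct and is the standard three-term sandwich with the $\epsilon/2$ split; since the paper states this claim without proof (``well known and can be easily verified''), that half matches what the author presumably has in mind. The backward direction, however, contains a genuine gap, and it sits exactly where you flagged it but did not fill it. From failure of UCP you get, for each $m$, a distribution $P_m$ and a sample event on which some $h^\star \in H$ satisfies $|L_S(h^\star)-L_{P_m}(h^\star)|>\epsilon_0$; but this $h^\star$ need not belong to $\argmin\{L_S(h): h \in H\}$. On the very same deviation event there may exist another hypothesis with strictly smaller empirical risk \emph{and} near-optimal true risk, in which case \emph{every} ERM succeeds even though uniform convergence fails on that sample. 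Consequently no choice of tie-breaking or selection rule among empirical minimisers can be ``simultaneously bad against the sequence $(P_m)$'' --- the bad hypothesis may simply never be an empirical minimiser, for any ERM. The complementation identity $L_P(\bar h)=1-L_P(h)$ that you propose for converting overestimation into underestimation does not repair this, because $\bar h = 1-h$ is in general not a member of $H$; only for complement-closed classes does it move you between the two one-sided cases inside the class.

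The standard way to close this hole is indirect, through the VC dimension rather than through contraposition from UCP failure: if every ERM is PAC then $\mathrm{VCdim}(H)$ must be finite, by a No-Free-Lunch construction --- take a set $A$ of size $2m$ shattered by $H$, put the uniform marginal on $A$, label by a suitable $h\in H$ (shattering guarantees realizability of adversarial labelings), and consider the fixed, distribution-blind ERM that outputs a hypothesis consistent with $S$ but erring as much as possible off the sample; shattering is precisely what gives you the control over empirical minimisers that your sketch lacks, and it yields excess risk and failure probability bounded below by absolute constants, uniformly in $m$. Then finite VC dimension implies UCP by the Vapnik--Chervonenkis theorem \cite{vapnik1971uniform} (symmetrization plus Sauer's lemma). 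Two further remarks: first, this route resolves your ``single fixed ERM versus varying $P_m$'' worry, since the adversarial ERM is defined once and for all; second, the VC theorem step carries exactly the measurability hypotheses whose necessity is the point of Theorem \ref{thm:ERMfails} in this paper, so the claim's ``easily verified'' equivalence is itself implicitly conditioned on the suprema involved being measurable --- an assumption your forward direction also quietly uses when it assigns probability to the event $\sup_{h\in H}|L_S(h)-L_P(h)|\le\epsilon/2$.
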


The following is a seminal result that, in a sense spearheaded modern machine learning theory.
\begin{theorem}[Vapnik-Chervonenkis 1971 \cite{vapnik1971uniform}]  
A class $H$ has the uniform convergence property if and only if its Vapnik-Chervonenkis dimension is finite.

\end{theorem}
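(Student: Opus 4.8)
The statement is an equivalence whose two directions call for completely different techniques, so the plan is to treat them separately: I would dispose of the ``only if'' direction (UCP forces finite VC dimension) by a short contrapositive argument, and spend the real effort on the converse.

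For the contrapositive, suppose $\VC(H)=\infty$. Then for every $m$ there is a shattered set $A$ with $|A|=2m$. I would let $P$ have marginal uniform on $A$ and labels identically $0$. For \emph{any} sample $S\sim P^m$ the number of distinct sampled points is at most $m$, so at least $m$ of the $2m$ points of $A$ are never seen; since $A$ is shattered, there is some $h\in H$ that agrees with every sampled point (so $L_S(h)=0$) yet predicts $1$ on all the unseen points (so $L_P(h)\ge\thalf$). Hence $\sup_{h\in H}|L_S(h)-L_P(h)|\ge\thalf$ for every realization of $S$, and this holds for arbitrarily large $m$, so $H$ cannot have UCP for any $\epsilon<\thalf$.

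The substantive direction, that $\VC(H)=d<\infty$ implies UCP, I would build on three ingredients. First, the Sauer--Shelah lemma bounds the growth function $\Pi_H(n)$ --- the maximum over $|A|=n$ of the number of distinct restrictions $\{h\cap A:h\in H\}$ --- by a polynomial in $n$ of degree $d$, so on any finite sample $H$ behaves like a class of only polynomially many functions. Second, a symmetrization (``ghost sample'') argument replaces the inaccessible population error by a second empirical error: for $m$ large enough (via a Chebyshev estimate controlling how far the ghost error strays from $L_P(h)$) one shows
\[
\Pr_{S\sim P^m}\Big[\sup_{h\in H}|L_S(h)-L_P(h)|>\epsilon\Big]
\;\le\; 2\,\Pr_{S,S'\sim P^m}\Big[\sup_{h\in H}|L_S(h)-L_{S'}(h)|>\tfrac{\epsilon}{2}\Big].
\]
The right-hand event depends only on the behaviour of $H$ on the $2m$ sampled points. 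Third, I would condition on those points and exploit that randomly swapping the $i$-th point of $S$ with the $i$-th point of $S'$ leaves the distribution unchanged; a Hoeffding bound over these random signs controls each fixed behaviour, and a union bound over the at most $\Pi_H(2m)$ behaviours gives
\[
\Pr_{S,S'}\Big[\sup_{h\in H}|L_S(h)-L_{S'}(h)|>\tfrac{\epsilon}{2}\Big]
\;\le\; 2\,\Pi_H(2m)\,\e^{-m\epsilon^2/8}.
\]
Because $\Pi_H(2m)$ grows only polynomially while the exponential decays, the whole expression falls below $\delta$ once $m\ge m_H(\epsilon,\delta)$, and crucially this threshold depends only on $\epsilon,\delta,d$ and not on $P$ --- exactly the uniformity UCP demands.

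The main obstacle is the symmetrization step: it is the one genuinely clever move, converting an intractable supremum that involves the unknown $P$ and a possibly uncountable $H$ into a finite combinatorial problem on $2m$ points, and making it rigorous requires the Chebyshev control of the ghost error alluded to above. I would also flag a measurability caveat, since the event $\{\sup_{h\in H}|L_S(h)-L_P(h)|>\epsilon\}$ need not be measurable for an arbitrary $H$; the clean statement holds under the standard measurability conventions, and --- as the remainder of this note demonstrates --- these hypotheses cannot simply be dropped.
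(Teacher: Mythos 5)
Your proposal is correct and follows the standard route --- the contrapositive via arbitrarily large shattered sets for the ``only if'' direction, and Sauer's lemma plus the double-sample symmetrization trick for the converse --- which is exactly the argument the paper itself alludes to in its discussion following the theorem (the paper cites the result rather than proving it). Your closing measurability caveat is also precisely aligned with the paper's central point that this implicit hypothesis in the symmetrization step cannot be dropped.
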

In their proof, Vapnik and Chervonenkis invoke some subtle measurability assumption. The vast literature of PAC style learning that followed that paper, often fails to mention that assumption. In the next section we show that such a condition is indeed necessary.
%\end{description}

\subsection{A class of VCdim 1 for which ERM fails}

Let our domain set be the real unit interval $[0,1]$ and let $U$ be the uniform (Lebsegue) measure over it. 

\begin{theorem} \label{thm:ERMfails} Assuming the continuum hypothesis, there exists a class of VC dimension 1 such that, for some probability distribution over its domain and for some classifier $h \in H$, some empirical risk minimization (ERM) rule fails badly when trained over samples generated by $P$ and labeled by $h$. More concretely, for any sample size $m$ with probability 1 over samples of that size, the error of that rule, when applied to the sample,  will be 1. \\

Furthermore, it is a class of measurable subsets of the unit interval, and the probability distribution with respect to which it fails is the uniform distribution over that interval.

\end{theorem}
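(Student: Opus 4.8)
The plan is to exploit the continuum hypothesis to well-order $[0,1]$ so that every proper initial segment is Lebesgue-null, and then take $H$ to consist of the complementary co-null final segments. Concretely, under CH the cardinality of the continuum equals $\aleph_1$, so I fix a well-ordering $\preceq$ of $\Xcal=[0,1]$ of order type $\omega_1$. The decisive feature is that for every $x$ the strict initial segment $\{y: y\prec x\}$ has cardinality at most $\aleph_0$, hence is countable and therefore Lebesgue-null; dually, every final segment $F_x:=\{y: x\preceq y\}$ is co-countable, hence measurable with $U(F_x)=1$. I would set $H=\{F_x: x\in[0,1]\}\cup\{\emptyset\}$.

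Two facts about $H$ then come for free. First, with respect to the reverse order $\succeq$ (again a linear order) each $F_x$ is an initial segment: if $x\preceq b\preceq a$ then $x\preceq a$, so $F_x$ is downward closed under $\succeq$. Hence Claim \ref{claim_inclusion} gives $VCdim(H)\le 1$; and taking the $\preceq$-least point $x_0$ we get $F_{x_0}=[0,1]$, which together with $\emptyset$ shatters any singleton, so $VCdim(H)=1$. Second, every member of $H$ is Lebesgue measurable, which is exactly the ``furthermore'' clause of the statement.

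Next I take $P$ to have marginal $D_P=U$ and target concept $h^*=\emptyset\in H$ (the constant-$0$ function), so the problem is realizable and $\inf_{h\in H}L_P(h)=0$. Given a sample $S$ of $m$ points drawn independently from $U$ and labeled $0$ by $h^*$, note that with probability $1$ the points $x_1,\ldots,x_m$ are distinct (as $U$ is non-atomic), so they have a $\preceq$-maximum $x^*$; since $\omega_1$ has no largest element, $x^*$ has a $\preceq$-successor $s$. Define $\Acal(S)=F_s$ on such realizable all-$0$ samples, and let $\Acal$ pick an arbitrary empirical minimizer otherwise. Because $s\succ x_i$ for every $i$, the set $F_s$ contains no sample point, so $L_S(F_s)=0$; as $0$ is attained, $\Acal$ is a genuine ERM. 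But $U(F_s)=1$, so against the constant-$0$ target $L_P(\Acal(S))=U(F_s)=1$. This holds for every $m$ on a full-measure set of samples, which is the claimed catastrophic failure.

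The delicate point, and the place where CH does all the work, is the simultaneous demand that the members of $H$ be measurable of full measure yet still form a chain whose ``small'' end can be pushed strictly past any finite sample. A well-order of type $\omega_1$ resolves exactly this tension: initial segments are countable (null), so final segments are co-null (measure $1$), and finiteness of the sample always lets me step one successor above its maximum. I would stress that $\Acal$ is deliberately a non-measurable function of the sample, so there is no conflict with the Vapnik--Chervonenkis uniform convergence theorem, which carries precisely the measurability hypothesis whose necessity this example exhibits. The only routine verifications remaining are that adjoining $\emptyset$ does not raise the VC dimension and that non-atomicity of $U$ makes coincident sample points a probability-$0$ event.
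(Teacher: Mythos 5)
Your proposal is correct and takes essentially the same approach as the paper, just in complemented form: the paper works with the countable initial segments $h_r=\{s:s\prec r\}$ and the all-ones target $[0,1]$, while you work with their complements, the co-countable final segments $F_x$, and the target $\emptyset$ (the two constructions are related by the $f$-representation with $f\equiv 1$). As a minor bonus, your explicit successor step (outputting $F_s$ for $s$ the successor of the sample's $\preceq$-maximum) cleanly guarantees zero empirical error, tidying a small off-by-one in the paper's rule, which outputs $h_{r_S^{\star}}$, a set that excludes the point $r_S^{\star}$ itself.
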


\begin{proof} Recall that the continuum hypothesis states that $2^{\aleph_0}=\aleph_1$ (in other words, that every infinite subset of reals is either countable or it can be mapped \emph{onto} the full set of reals). It is well known that this assumption implies (in fact, equivalent to) the existence of a well ordering, $\prec$ over  $[0,1]$, so that every initial segment is countable (for every $r \in [0,1]$, the set $\{s: s \prec r\}$ is countable). Given such an ordering define a class of subsets $H=\{[0,1]\} \cup\{h_r: r \in [0,1]\}$, where, for each real number $r$,
$h_r=\{s: s \prec r\}$.

Note that, by the choice of the ordering relation $\prec$, every set in $H$ is either countable or equals the unit interval. Therefore each member of the class $H$ is Lebesgue measurable.

Furthermore, since $\prec$ is an ordering over the real interval, for every $s \prec t$, $h_s \subseteq h_t$ (and every $h_s$ is a subset of the set $[0,1]$. It follows that 
VC-dim($H)=1$.

We will now show that there is an ERM learning algorithm for the class $H$ that fails badly.
Define the learning rule $\Acal$ as follows:
\begin{quote} Given any finite sample $S= ((r_1, y_1), (r_2, y_2), \ldots (r_m, y_m))$ labeled according to some $h \in H$,
 let $r_S^{\star} = \max\{r_i: (r_i, 1) \in S\}$ and define $\Acal(S) = h_{r_S^{\star}}$.
\end{quote}

Pick  $t=[0,1]$ as the target classifier. That is, the labeling rule that assigned the value 1 to every instance. Every training sample has, therefore, the form  $S=((r_1, 1), (r_2, 1), \ldots (r_m, 1))$. By the above definition of the ERM earning rule we consider,  for any such sample $S$, $L_S(\Acal(S))=0$. However, since $\Acal(S)=h_r$ for some $r \in[0,1]$, and $h_r = \{s: s \prec r\}$,  by our choice of the ordering relation  $\prec$, $\Acal(S)$ is a countable set (that is,  assigns the label 1 only to countably many instances). It follows that, for the uniform distribution, $U$, $L_{U, t}(\Acal(S))=P([0,1] \setminus \Acal(S))=1$ (where, for a marginal probability distribution, $D$,  and labeling rules $t,h  : X \to \{0,1\}$, $L_{D,t}(h) \stackrel{\rm def}{=} D[\{x: h(x \neq t(x) \}]$). In other words, for every sample size, $m$, with probability 1 over $P$-generated i.i.d. samples, $S$, of that size, the $0-1$ loss of $\Acal(S)$ is 1.
\end{proof}

How come the above example does not contradict  the Vapnik-Chervonenkis characterization of ERM learnability in terms of the VC-dimension (a.k.a. the Fundamental Theorem of Statistical Machine Learning)? The devil is, of course, in measurability issues. The common proof of that fundamental theorem goes through the double sample trick. 

To prove the theorem one needs to upper bound the probability of the set of samples for which an ERM learner fails. Namely
\[\Pr_{S \sim P^m} [ \exists h \in H ~\mbox{such that} ~L_S(h)=0 ~\mbox{but} ~ L_P(h)> \epsilon\}]\]
This is usually done by upper bounding that event by 
\[2\Pr_{S, T \sim (P^m)^2} [ \exists h \in H ~\mbox{such that} ~L_S(h)=0 ~\mbox{but} ~ L_T(h)> \epsilon\}]\]

(and, for any $H$ with a finite VC-dimension,  this probability can be shown to go to zero, as $m$ goes to $\infty$ based on Sauer's lemma).

However, for such an argument to go through, it should be the case that last probability exists. Namely, that the set 
\[\Delta^m(H) \stackrel{\rm def}{=}\{S, T \in (X \times \{0,1\})^{2m} : ~\exists h \in H ~\mbox{such that} ~L_S(h)=0 ~\mbox{but} ~ L_T(h)> \epsilon\}\]
is measurable under the product measure $P^{2m}$.

In the case of the example used to prove Theorem \ref{thm:ERMfails}, that last measurability requirement fails already for $m=1$.
To see that, note that 
%\[\{S, T \in (X \times \{0,1\})^{2} : ~\exists h \in H ~\mbox{such that} ~L_S(h)=0 ~\mbox{but} ~ L_T(h)> \epsilon\}=\{(x,y): x \prec y\}\]
\[\Delta^1(H)=\{(x,y): x \prec y\}\]
(when we fix the labeling function that labels $S$ and $T$ to be the constant 1 function).

Recall that in the above example, the domain set is $X=[0,1]$ and the underlying probability distribution is the uniform distribution, or equivalently, the Lebesgue measure.

The fact that $\{(x,y): x \prec y\}$ is not measurable under the Lebesgue measure over $[0,1]^2$ follows from the failure of Fubini's integration lemma for the (characteristic function of) that set:\\

\[\int_{y=o}^{1} \int_{x=0}^{1} \large{\mathbf 1}_{x \prec y} dx ~dy = \int_{y=o}^{1} 0 ~dy = 0\]

whereas

\[\int_{x=o}^{1} \int_{y=0}^{1} \large{\mathbf 1}_{x \prec y} dy ~dx = \int_{x=o}^{1} 1~dx = 1\]

The first equation holds since for every $y$, $\{x: x \prec y\}$ is countable, so $\int_{x=0}^{1} \large{\mathbf 1}_{x \prec y} dx = 0$ for any $y$. 
The second equation holds since for every $x$, $\{y: x \prec y\}$ is co-countable, so $ \int_{y=0}^{1} \large{\mathbf 1}_{x \prec y} dy= 1$ for any $x$. 

%\noindent \textbf{Question for me:} Does measurability of $\Delta^1(H)$ imply measurability of $\Delta^m(H)$ for all $m$?

\bibliographystyle{unsrt}
\bibliography{Researchbib.bib}

\end{document}